\def\atan{\mathop{\rm atan}}
\newcommand{\E} {\mathbf{E}}
\renewcommand{\P}{{\mathbb P}}
\newcommand{\N}{\mathbb{N}}
\newcommand{\bR}{\mathbb{R}}
\newcommand{\bP}{\mathbb{P}}
\newcommand{\bE}{\mathbb{E}}
\newcommand{\cA}{\mathcal{A}}
\newcommand{\cY}{\mathcal{Y}}
\newcommand{\cX}{\mathcal{X}}
\newtheorem{Proposition}{\bf Proposition}
\begin{document}

\title{Optimal Policies Search for Sensor
  Management : Application to the ESA Radar}
\author{
\authorblockN{\textbf{Thomas Br\'ehard}}
\authorblockA{R\&D Department\\
Huge Corporation\\
Gigantica, France}
\and
\authorblockN{\textbf{Emmanuel Duflos}}
\authorblockN{\textbf{Philippe Vanheeghe}}
\authorblockA{Ecole Centrale de lille\\
LAGIS UMR CNRS 8146\\
INRIA Lille - Nord Europe\\
Project Team SequeL\\
Cit\'e Scientifique - BP 46\\
59851, Villeneuve d'Ascq Cedex, France\\
Email: emmanuel.duflos@ec-lille.fr}
\and
\authorblockN{\textbf{Pierre-Arnaud Coquelin}}
\authorblockA{Predict and Control\\
INRIA Lille - Nord Europe\\
Parc Scientifique de la Haute Borne\\
40 Avenue Halley\\
59650, Villeneuve d'Ascq Cedex, France}
}

\maketitle

\selectlanguage{english}

\begin{abstract}
  This paper introduces a new approach to solve sensor management
  problems. Classically sensor management problems can be well
  formalized as Partially-Observed Markov Decision Processes
  (POMPD). The original approach developped here consists in deriving
  the optimal parameterized policy based on stochastic gradient
  estimation. We assume in this work that it is possible to learn the
  optimal policy off-line (in simulation ) using models of the
  environement and of the sensor(s). The learned policy can then be
  used to manage the sensor(s). In order to approximate the gradient
  in a stochastic context, we introduce a new method to approximate
  the gradient, based on Infinitesimal Approximation (IPA). The
  effectiveness of this general framework is illustrated by the
  managing of an Electronically Scanned Array Radar.
\end{abstract}

\noindent {\bf Keywords: Sensor(s) Management, Partially Observable
  Markov Decision Process, Stochastic Gradient Estimation, AESA
  Radar.}

 \begin{center} \bfseries Sensor(s) Management Special Session \end{center}
%
\IEEEpeerreviewmaketitle
\section{Introduction}
\label{sec:Introduction}
Years after years the complexity and the performances of many sensors
have increased leading to more and more complex sensor(s)-based
systems which supply the decision centers with an increasing amount of
data. The number, the types and the agility of sensors along with the
increased quality of data far outstrip the ability of a human to
manage them: it is often difficult to compare how much information can
be gained by way of a given management scheme \cite{kalandros2004}. It
results from this the necessity to derive unmanned sensing platforms
that have the capacity to adapt to their environment
\cite{shihao2007}. This problem is often refered as the
\textit{Sensor(s) Management Problem}. In more simple situations, the
operational context may lead to works on sensor(s) management like in
the \textit{radar - infrared sensor} case \cite{blackman99}. A general
definition of this problem could then be : sensor management is the
effective use of available sensing and database capabilities to meet
the mission goals. Many applications deal with military applications,a
classical one being to detect, to track and tp identify smart targets
(a smart target can change its way of moving or its way of sensing
when it detects it is under analysis) with several sensors. The
questions are then the following at each time: how must we group the
sensors, how long, in which direction, and with which functioning
mode? The increasing complexity of the targets to be detected, tracked
and identified, makes the management even more difficult and led to
the development of researches on the definition of an optimal sensor
management scheme in which the targets and the sensors are treated
altogether in a complex dynamic system \cite{doucet2002}.

Sensor Management has become very popular this last years and many
approaches can be found in the litterature. In \cite{duflos2007a} and
\cite{duflos2007b} the authors use a the modelling of the detection
process of an Electronically Scanned Array (ESA) Radar to propose
management scheme during the detection step. In \cite{kastella1997,
  kolba2007, rostamabad2007} an information-based approach is use to
manage a set of sensors. From a theorical point of view the sensor
management can be modelled as a Partially Observable Markov Decision
Process (POMDP) \cite{kreucher2004, krish2002, krish2007}. Whatever
the underlying application, the sensor management problem consists in
choosing at each time $t$ an action $A_t$ within the set $\mathcal{A}$ of
available actions. The choice of $A_t$ is generally based on the
density state vector $X_t$ describing the environment of the system
and variables of the system itself. It is generally assumed that the
state or at least a part of this state is Markovian. Moreover in most
of the applications, we only have access to a partial information of
the state and $X_t$ must be estimated from the measurements
$\{Y_{s}\}_{1\leq s \leq t}$. This estimation process is often derived
within a Bayesian framework where we use state-dynamics and
observation models such as:

\begin{equation}
    X_{t+1}=F(X_t,A_t,N_t)
\end{equation}

\begin{equation}
    Y_t= H(X_t,W_t)
\end{equation}

where $N_t$, $W_t$, $F$ and $H$ respectively stands for the state
noise, the measurements noise, the state-dynamics and the measurement
function. $F$ and $H$ are generally time varying functions. The
control problem consists in finding the scheduling policy $\pi$
i.e. select $A_{t}$ given the past and the possible futures. However,
this control problem may have a theorical solution, it is generally
untractable in practice. However few works propose optimal solution in
the frame of POMDPs like \cite{krish2007}. Beside, several works have been
carried out to find sub-optimal policies like for instance myopic
policies. Reinforcement Learning and Q-Learning have also been used to
propose a solution (\cite{sutton90, kreucher2005}).

We propose in this paper to look for a policy within a class of
parametrized policy $\pi_{\theta}$ and to learn it which means learn
the optimal value of $\theta$. Funding our work on the approach
described in \cite{coquelin2008} we assume that it is possible to
learn this policy \textit{in simulation} using models of the overall
system. Once the optimal parameter has been found it is used to manage
the sensor(s). The frame of this work being the detection and
localization of targets, we show in the last part of this paper how it
may be applied the the management of an ESA radar.

The section \ref{sec:pomdp} described the modelling of a sensor
management problem using a POMDP approach. In the section
\ref{sec:policy} we derive the algorithm to learn the parameter of the
policy.In section \ref{sec:aesa} we show how this method may be used
for the tasking of an ESA radar. Finally section \ref{sec:simulations}
exhibits firts simulations results.

\section{Modelling}
\label{sec:pomdp}

\subsection{POMDP Modelling}

Let us consider three measurable continuous spaces denoted by $\cX$,
$\cA$ and $\cY$, $\cX$ is called the \textit{state space}, $\cY$ the
observation space and $\cA$ the \textit{action space}. We call
$\mathcal{M}(\cX)$ the set of all the measures defined on $\cX$. A
Partially-Observable Decision Proceess is defined by a \textit{state
  process} $(X_t)_{t\geq0} \in \cX$, an \textit{observation process}
$(Y_t)_{t\geq 1} \in \cY$ and a set of actions $(A_t)_{t\geq 1} \in
\cA$. In these definitions $t$ stands usually for the
\textit{time}. The state process is an homogeneous Markov chain with
initial probability measure $\mu(dx_0) \in \mathcal{M}(\cX)$ and with
Markov transition kernel $K(dx_{t+1}|x_t)$
(\cite{coquelin+deguest+munos2007}):

\begin{equation}
\label{eq:kernel}
\forall t \geq 0,X_{t+1}\sim K(\cdot|X_t)  
\end{equation}

\begin{equation}
X_0 \sim \mu 
\end{equation}

$(Y_t)_{t\geq 1}$ is called the observation is linked with the state
process by the conditional probability measure:

\begin{equation}
\mathcal{P}(Y_t\in dy_t|X_t=x_t)=g(x_t,y_t) \, dy_t
\end{equation}

where $g:\cX \times \cY \rightarrow [0,1]$ is the marginal density
function of $Y_t$ given $X_t$. In a general way, the state process
evolves continuously with respect to time $t$ whereas the observations
are made at sampled time $t_n$. A new observation is used to derive a
new action. We will therefore consider in the following the processes
$(X_t)_{t\geq0}$, $(Y_n)_{n\in\mathbb{N}}$, $(A_n)_{n\in\mathbb{N}}$
where $n$ stands for the index of the observation. We also assume that
there exists two generative functions $F_\mu:U\rightarrow\cX$ and
$F:\cX\times U\rightarrow\cX$, where $(U,\sigma(U),\nu)$ is a
probability space, such that for any measurable {\it test function }
$f$ defined over $\cX$ we have:

\begin{eqnarray}
\int_{\cX}f(x_t)K(dx_t|x_{t-1})&=&\int f(F(x_{t-1},u))\nu(du)
\end{eqnarray}

and

\begin{eqnarray}
\int_{\cX}f(x_0)\mu(dx_0)&=&\int f(F_\mu(u))\nu(du).
\end{eqnarray}

In many practical situations, $U=[0,1]^{n_U}$, and $u$ is a $n_U$-uple
of pseudo random numbers generated by a computer. For sake of
simplicity, we adopt the notations $K(dx_0|x_{-1})\triangleq\mu(dx_0)$
and $F(x_{-1},u)\triangleq F_\mu(u)$. Under this framework, the Markov
Chain $(X_t)_{t\geq0}$ is fully specified by the following dynamical
equation:

\begin{equation}
X_{t+1}=F(X_t,U_t),\;U_t\stackrel{i.i.d.}{\sim}\nu
\end{equation}

The observation process $(Y_n)_{n\in\N}$, defined on the measurable
space $(\cY,\sigma(Y))$, is actually linked with the state process by the
following conditional probability measure 

\begin{equation}
\bP(Y_n\in dy_n|X_{t_n}=x_{t_n},A_n)=g(y_n,x_{t_n},A_n)\lambda(dy_t)
\end{equation}

where $A_n\in\cA$ is defined on the measurable space $(\cA,\sigma(A))$
and $\lambda$ is a fixed probability measure on $(\cY,\sigma(Y))$. As
we assume that observations are conditionally independent given the
state process we cab write $\forall 1 \leq i,j\leq t,\;i\neq j$:

\begin{eqnarray}
\bP(Y_i \in dy_i, Y_j \in dy_j|X_{0:t},A_i,A_j)= \nonumber \\
\bP(Y_{i}\in dy_i|X_{0:t},A_i)\bP(Y_{j}\in dy_j|X_{0:t},A_j)
\end{eqnarray}

where we have adopted the usual notation $z_{i:j}=(z_k)_{i\leq k\leq j}$.

\subsection{Filtering distribution in a Partially-Observable Markov Decision Process}
Given a sequence of action $A_{1:n}$ and a sample trajectory of the
observation process $y_{1:n}$ and indices $\{n_1,n_2,t_1,t_2\}$ such
that $1\leq n_1\leq n_2 \leq n$ and $0\leq t_1\leq t_{n_1}\leq
t_{n_2}\leq t_2\leq t_n$, we define, using the , the
posterior probability distribution $M_{t_1:t_2|n_1:n_2}(dx_{t_1:t_2})$
by (\cite{DelMoral2004}):

\begin{eqnarray}
\label{equ:posterior}
\bP(X_{t_1:t_2}\in dx_{t_1:t_2}|Y_{n_1:n_2}=y_{n_1:n_2},A_{n_1:n_2})
\end{eqnarray}

Using the Feynman-Kac framework, the probabilty \ref{equ:posterior} can be written:

\begin{eqnarray}
\label{equ:posterior2}
\frac{\prod_{t=t_1}^{t_2} K(dx_t|x_{t-1}) \prod_{j=n_1}^{n_2}
G_{t_j}(x_{t_j})}{\int_{\cX^{t_2-t_1}}\prod_{t=t_1}^{t_2} K(dx_t|x_{t-1})
\prod_{j=n_1}^{n_2} G_{t_j}(x_{t_j})},
\end{eqnarray} 

where for simplicity's sake, $G_{t_n}(x_{t_n})\triangleq
g(y_n,x_{t_n},A_n)$ and $G_0(x_0)\triangleq0$. One of the main
interest here is to estimate the state at time $t$ from noisy
observations $y_{1:n_t}$ with $n_t$ the index of the last observation
just before time $t$. From a bayesian point of view this information
is completely contained in the so-called \textit{filtering
  distribution} $M_{t:t|1:n_t}$. In the following, the filtering
distribution will simply be denoted as $M_t$.

\subsection{Numerical methods for estimating the filtering distribution}
Given a measurable test function $f:\cX\rightarrow\bR$, we want to
evaluate
\begin{eqnarray}
M_t(f)=\bE[f(X_t)|Y_{1:n_t}=y_{1:n_t},A_{1:n_t}]
\end{eqnarray}

which is equal, using the Feynman Kac framework, to:

\begin{equation}
\frac{\bE[f(X_t)\prod_{j=1}^{n_t}G_{t_j}(X_{t_j})]}{\bE[\prod_{j=1}^{n_t}G_{t_j}(X_{t_j})]}
\end{equation}

In general, it is impossible to find $M_t(f)$ exactly except for
simple cases such as linear/gaussian (using Kalman filter) or for
finite state space Hidden Markov Models.  In the general dynamics,
continuous space case considered here, possible numerical methods for
computing $M_t(f)$ include the Extended Kalman filter, quantization
methods, Markov Chain Monte Carlo methods and Sequential Monte Carlo
methods (SMC), also called particle filtering. The basic SMC method,
called Bootstrap Filter, approximates $M_t(f)$ by an empirical
distribution $M_t^N(f)=\frac{1}{N}\sum_{i=1}^Nf(x_i^N)$ made of $N$
so-called \textit{particles} (\cite{Doucet+Godsill+Andrieu2000}). It
consists in a sequence of transition/selection steps: at time $t$,
given observation $y_t$ (\cite{coquelin2008}):

\begin{itemize}
\item {\bf Transition step:} (also called {\bf importance sampling} or
  {\bf mutation}) a successor particles population
  $\widetilde{x}^{1:N}_t$ is generated according to the state dynamics
  from the previous population $x_{t-1}^{1:N}$. The (importance
  sampling) weights $w^{1:N}_t = \frac{g(\widetilde{x}^{1:N}_t,y_t)}{\sum_{j=1}^N
    g(\widetilde{x}^j_t,y_t)}$ are evaluated.
\item {\bf Selection step:} Resample (with replacement) $N$ particles
  $x_t^{1:N}$ from the set $\widetilde{x}^{1:N}_t$ according to the
  weights $w^{1:N}_t$. We write
  $x_t^{1:N}=\widetilde{x}^{k^{1:N}_t}_t$ where $k^{1:N}_t$ are the
  selection indices.
\end{itemize}

Resampling is used to avoid the problem of degeneracy of the
algorithm, i.e. that most of the weights decreases to zero.  It
consists in selecting new particle positions such as to preserve a
consistency property :

\begin{equation}
\sum_{i=1}^N w^i_t \phi(\widetilde{x}^{i}_t) = \E[\frac{1}{N}\sum_{i=1}^N\phi(x^{i}_t)]
\end{equation}

The simplest version introduced in \cite{gordon1993} consists in
choosing the selection indices $k^{1:N}_t$ by an independent sampling
from the set $1\!:\!N$ according to a multinomial distribution with
parameters $w_t^{1:N}$, i.e. $\P(k_t^i = j)=w_t^j$, for all $1\leq
i\leq N$. The idea is to replicate the particles in proportion to
their weights. The reader can find some convergence results of
$M_t^N(f)$ to $M_t(f)$ (e.g.  Law of Large Numbers or Central Limit
Theorems) in \cite{DelMoral2004}, but for our purpose we note that
under weak conditions on the test function and on the HMM dynamics, we
have the asymptotic consistency property in probability, i.e.
$\lim_{N\rightarrow\infty}M_t^N(f)\stackrel{\bP}{=}M_t(f)$.

\section{Policy Learning Algorithm}
\label{sec:policy}

\subsection{Optimal Parameterized Policy for Partially-Observable Markov Decision Process}

Let $R_t$ be a real value reward function 
\begin{eqnarray}
R_{t} \triangleq R(X_t,M_{t}(f))\;. 
\end{eqnarray}
The goal is to find a policy 

\begin{equation}
\pi: \cA^n\times \cY^n\rightarrow \cA
\end{equation}

that maximizes the criterion performance :
\begin{eqnarray}
 J_{\pi} = \int_0^T \bE[R_{t}]dt
\end{eqnarray}

where $T$ is the duration of the scenario. Designing in practice
policies that depend on the whole trajectory of the past
observations/actions is unrealistic. It has been proved that the class
of stationary policies that depend on the filtering distribution
conditionally to past observations/actions $M_t$ contains the optimal
policy. In general the filtering distribution is an infinite
dimensional object, and it cannot be represented in a computer and so
is the policy. We therefore propose to look for the optimal policy in
a class of parameterized policies $(\pi_\alpha)_{\alpha\in\Gamma}$
that depend on a statistic of the filtering distribution :
\begin{eqnarray}
A_{n+1}=\pi_{\alpha}(M_{t_n}(f)) 
\end{eqnarray}

where $f$ is any test function. As the policy $\pi$ is parameterized by
$\alpha$, the performance criterion now depends only on $\alpha$. Thus we
can maximize it by achieving a stochastic gradient ascent with respect
to $\alpha$ :
\begin{eqnarray}
\alpha_{k+1} = \alpha_{k} +\eta_k\nabla J_{{\alpha_k}}, \quad k\geq 0
\end{eqnarray}

where $\nabla J_{{\alpha_k}}$ denotes the gradient of $J_{\alpha_k}$
w.r.t $\alpha_k$. By convention $\nabla J_{{\alpha_k}}$ is column
vector whose $i$-th component is the partial derivative with respect
to $\alpha_i$. $(\eta_k)_{k\geq 0}$ is a non-increasing positive
sequence tending to zero. We present in the two following subsection a
possible approach to estimate $\nabla J_{{\alpha_k}}$ based on
Infinitesimal Perturbation Analysis (IPA).

\subsection{Infinitesimal Perturbation Analysis for gradient estimation}
\label{ipa}

We assume that we can write the following equality at each $k$:

\begin{equation}
\nabla J_ {\alpha} = \int_0^T \nabla_{\alpha}\mathbb{E}[R_{t}]dt
\end{equation}

\begin{Proposition}
We have the following decomposition of the
gradient
\begin{eqnarray}\label{eq:ipa}
 \nabla_{\alpha}\bE[R_{t}] & =& 
\bE [M_{t}(fS_t)\nabla_{M_{t}(f)}R_{t}] \nonumber \\
&-&\bE [M_{t}(f)M_{t}(S_t)\nabla_{M_{t}(f)}R_{t}] \nonumber \\
&+& \bE[R_{t}S_{t}]
\end{eqnarray} 

where
\begin{eqnarray}\label{def:S}
 S_{t} = \sum_{j=1}^{p_t}\frac{\nabla_{\alpha}G_{t_j}(X_{t_j})}{G_{t_j}(X_{t_j})}
\end{eqnarray} 

\end{Proposition}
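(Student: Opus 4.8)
The plan is to track the two distinct channels through which the parameter $\alpha$ enters $\bE[R_t]$ and to apply the chain rule to each. Writing the expectation as an integral over the joint law of the state trajectory and the observations, and recalling that the transition kernel $K$ (equivalently the generative map $F$) carries \emph{no} action dependence, the parameter $\alpha$ appears only in two places: (i) inside the reward through the filtering functional $M_t(f)$, which is the second argument of $R$; and (ii) in the observation likelihood weight $\prod_{j=1}^{n_t}G_{t_j}(X_{t_j})$, since each potential $G_{t_j}(x)=g(y_j,x,A_j)$ depends on the action $A_j=\pi_{\alpha}(\cdot)$ and hence on $\alpha$. First I would invoke the hypothesis stated just above the Proposition, namely that $\nabla_{\alpha}$ and $\bE$ may be interchanged, which reduces the task to differentiating the integrand.

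Next I would differentiate the observation weight. Using $\nabla_{\alpha}\prod_{j}G_{t_j}(X_{t_j})=\big(\prod_{j}G_{t_j}(X_{t_j})\big)\sum_{j}\nabla_{\alpha}G_{t_j}(X_{t_j})/G_{t_j}(X_{t_j})$, the logarithmic-derivative factor is exactly $S_t$ as defined in the statement. This produces a score-function (likelihood-ratio) contribution $\bE[R_tS_t]$, which is the third line of the claim.

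For the remaining contribution I would apply the chain rule to $R_t=R(X_t,M_t(f))$, obtaining $\nabla_{M_t(f)}R_t\,\nabla_{\alpha}M_t(f)$; there is no direct $X_t$ term because the state trajectory is $\alpha$-independent. It then remains to compute $\nabla_{\alpha}M_t(f)$ from the Feynman--Kac ratio $M_t(f)=\bE[f(X_t)\prod_{j}G_{t_j}(X_{t_j})]/\bE[\prod_{j}G_{t_j}(X_{t_j})]$. Differentiating the quotient and again substituting $\nabla_{\alpha}\prod_{j}G_{t_j}=(\prod_{j}G_{t_j})S_t$ gives $\nabla_{\alpha}M_t(f)=M_t(fS_t)-M_t(f)M_t(S_t)$, where $M_t(\cdot)$ is read as the normalised Feynman--Kac expectation extended to path functionals involving $S_t$. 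Multiplying by $\nabla_{M_t(f)}R_t$ and taking the outer expectation yields the first two lines, and summing all three contributions closes the argument.

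I expect the main obstacle to be twofold. The delicate analytic point is the justification of differentiation under the expectation (domination plus smoothness of $g$ in $\alpha$), which the statement simply posits as a hypothesis. The more structural subtlety is the bookkeeping of the \emph{double role} of the potentials $G_{t_j}$: they appear simultaneously in the outer likelihood weight (the source of $\bE[R_tS_t]$) and inside the normalised filter $M_t(f)$ (the source of $M_t(fS_t)-M_t(f)M_t(S_t)$), and each $G_{t_j}$ depends on $\alpha$ both explicitly through its action argument and recursively through the past beliefs on which $\pi_{\alpha}$ acts. One must therefore verify that interpreting $\nabla_{\alpha}G_{t_j}$ as a total derivative keeps the definition of $S_t$ consistent across both of its occurrences.
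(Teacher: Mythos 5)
Your proposal is correct and follows essentially the same route as the paper's own proof: writing $\bE[R_t]$ as an integral against the Feynman--Kac weight $V_t=\prod_j G_{t_j}(X_{t_j})$, noting that only $R_t$ (through $M_t(f)$) and $V_t$ depend on $\alpha$, extracting the score term $\bE[R_tS_t]$ from $\nabla_\alpha V_t = S_tV_t$, and obtaining $\nabla_\alpha M_t(f)=M_t(fS_t)-M_t(f)M_t(S_t)$ by the quotient rule on the normalised Feynman--Kac ratio. Your closing remarks on domination and on the recursive dependence of $A_j=\pi_\alpha(\cdot)$ on $\alpha$ flag genuine subtleties that the paper itself leaves implicit, but they do not change the argument.
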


\begin{proof}
First let us rewrite $\nabla_{\alpha} \bE[R_{t}]$ as following:
\begin{eqnarray}\label{eq:appA:1}
 \nabla_{\alpha} \bE[R_{t}]  =\nabla_{\alpha}  \int_{\cX^{t}\times \cY^{n_t}} R_t U_tV_t\prod_{j=1}^{n_t}\lambda(dy_j)
\end{eqnarray}

where:

\begin{eqnarray}
\left\{
\begin{array}{cl}
U_t  &=   \prod_{i=0}^{t} K(dx_i|x_{i-1})\;,\\
V_t  &=   \prod_{j=1}^{n_t}G_{t_j}(x_{t_j})
\end{array}\right.\;.
\end{eqnarray}

Remarking that only $R_t$ and $V_t$ depends on $\alpha$ so that we obtain
\begin{eqnarray}\label{eq:appA:2}
\left\{
\begin{array}{cl}
 \nabla_{\alpha} V_t  &= S_tV_t \;, \\
 \nabla_{\alpha} R_t &= \nabla_{\alpha} M_t(f)\nabla_{ M_t(f)}R_t
\end{array}\right.
\end{eqnarray}
where $S_t$ is given by eq.(\ref{def:S}). Incorporating (\ref{eq:appA:1}) in (\ref{eq:appA:2}), we obtain
\begin{eqnarray}\label{eq:appA:3}
 \nabla_{\alpha}\bE[R_{t}]  = & \bE[\nabla_{\alpha} M_t(f)\nabla_{ M_t(f)}R_t]\;.
+ \bE[R_{t}S_{t}]\;.
\end{eqnarray}
Now using one more time (\ref{eq:appA:2}), we have
\begin{eqnarray}\label{eq:appA:4}
\nabla_{\alpha} M_t(f) &=& \nabla_{\alpha}\bE\Big[f(X_t)\frac{V_t}{\bE[V_t]} \Big]\nonumber\\
&=& \bE\Big[f(X_t)\frac{\nabla_{\alpha}V_t}{\bE[V_t]} \Big]-
\bE\Big[f(X_t)\frac{V_t\bE[\nabla_{\alpha}V_t]}{\bE[V_t]^2} \Big]\nonumber\\
&=& \bE\Big[f(X_t)S_t\frac{V_t}{\bE[V_t]} \Big]-
M_{t}S_t\bE\Big[\frac{V_t}{\bE[V_t]} \Big]\nonumber\\
&=& M_{t}(fS_t)-M_{t}(f)M_{t}(S_t)
\end{eqnarray}
so that we obtain (\ref{eq:ipa}) by incorporating (\ref{eq:appA:4}) in (\ref{eq:appA:3}).

\end{proof}

We can deduce directly Algorithm 1 from (\ref{eq:ipa}). It is
important to note that we must deal with two time-scales. This first
and the shorter one allows to simulate the continuous state $X_t$. On
the contrary the observation and action process are updated only each
time we get a new observation. These specific time is denoted $t_n$ in
the algorithm. That is the eason while there is an alternative to
update the variables $S_t$ and $\tilde{w}^{(i)}_{t-1}$. A new action
$A_n$ is also calculated each $t_n$ as already explained above. One
can also be surprised to calculate $R(X_t,M_t(f))$ using the sampled
value of $X_t$. To well understand this algorithm we must remind that
\textbf{the learning is made off-line} using a simulated process. It is
therefore possible to use the \textit{real} value of $X_t$ in this case.

\begin{algorithm}\caption{Policy Gradient in POMDP via IPA}
   \label{algo:ipa}
\begin{algorithmic}
   \STATE Initialize $\alpha_0 \in \Gamma$
   \FOR{$k=1$ {\bfseries to} $\infty$}%
   \FOR{$t=1$ {\bfseries to} $T$}
   \STATE Sample $u_t\sim\nu$ 
   \STATE Set $x_t = F(x_{t-1},u_{t})$,
   \STATE If $t = t_n$, sample $y_n\sim g(.,x_{t},a_n)\lambda(.)$
   \STATE Set ${s}_{t}=
\left\{\begin{array}{ll}s_{t-1}
+\frac{\frac{\partial g}{\partial\alpha}({x}_t,y_n,a_n)}{g({x}_t,y_n,a_n)}
&\textrm{ if } t=t_n\\
                       s_{t-1}&\textrm{ else }                               
     \end{array}\right.$ 
\STATE Set $\forall i\in\{1\ldots,I\}$ 

$\quad\tilde{x}_{t}^{(i)}=F(x_{t-1}^{(i)},a_{t-1},u_t^{(i)})$ where $u^{(i)}\stackrel{iid}{\sim}\nu$   

$\quad\tilde{s}^{(i)}_{t}=
\left\{\begin{array}{ll}s^{i}_{t-1}
+\frac{\frac{\partial g}{\partial\alpha}({x}^{i}_t,y_n,a_n)}{g({x}^{(i)}_t,y_n,a_n)}
&\textrm{ if } t=t_n\\
                       s^{i}_{t-1}&\textrm{ else }                               
     \end{array}\right.$

$\quad\tilde{w}^{(i)}_t = 
\left\{\begin{array}{ll}\frac{g({x}^{(i)}_t,y_n,a_n)\tilde{w}^{(i)}_{t-1}}{\sum_j g({x}^{(j)}_t,y_n,a_n)\tilde{w}^{(j)}_{t-1}}
&\textrm{ if } t=t_n\\
                       \tilde{w}^{(i)}_{t-1} &\textrm{ else }                               
     \end{array}\right.$
   \STATE Set
   $(x^{(i)}_t,s^{(i)}_{t})_{i\in\{1,\ldots,I\}}=(\tilde x^{(i)}_t,\tilde s^{(i)}_{t})_{i\in\{k_1,\ldots,k_I\}}$,  $k_{1:I}$
are selection indices associated to $(\tilde w^{(i)})_{i\in\{1,\ldots,I\}}$,
   \STATE $m_t(f)=\frac{1}{I}\sum_i f(x_{t}^{(i)})$, $m_t(s_t)=\frac{1}{I}\sum_i s^{(i)}$, $m_t(fs_t)=\frac{1}{I}\sum_i f(x_{t}^{(i)})s^{(i)}_{t}$,
   \STATE $a_{n+1} = \pi_{\alpha_k}(m_t)$ if $t=t_n$ 
   \STATE $r_t = R(x_t,m_t(f))$ 
   \STATE $\nabla r_t = (m_t(fs_t)- m_t(f) m_t(s_t))\frac{\partial R}{\partial m_t(f)}(x_t,m_t(f))+r_ts_t$
   \STATE $\nabla J_{\alpha_k} = \nabla J_{\alpha_k} +  \nabla r_t$
   \ENDFOR
   \STATE $\alpha_{k+1}= \alpha_k + \eta_k \nabla J_{\alpha_k}$
   \ENDFOR
\end{algorithmic}
\end{algorithm}

\section{Application to the ESA Radar}
\label{sec:aesa}
The ESA is an agile beam radar which means that it is able to point
its beam in any direction of the environnement almost instantaneously
without inertia. However, the targets in the environement are detected
w.r.t a probability of detection which depends on the direction of the
beam and the time of observation in this direction. In the following,
we precise first the nature of an action, then the influence of the
action onto the probability of detection and finally the nature of the
observations.

\paragraph*{Definition of the action}

The main property of an ESA is that it can point its beam without
mechanically adjusting the antenna. An ESA radar provides measurements
in a direction $\theta$. We note $\delta$, the time of observation in
this direction. In this work the the $n$-th action is :
 \begin{eqnarray}
 A_{n} = \begin{bmatrix}\theta_{n} & \delta_n \end{bmatrix}^T
\end{eqnarray}
with
\begin{eqnarray}
\left\{
\begin{array}{l}
\theta_{n}\in [-\frac{\pi}{2},\frac{\pi}{2}]\;,\\
\delta_n \in \mathbb R^+
\end{array}
\right.\quad \forall n\geq 0\;.
\end{eqnarray}

This is a simple possible action. One could increase the number of
componenets of an action by adding the emitted frequency for
instance. The action does not influence directly the observation
produced by the ESA but the probability of detection of a target.

\paragraph*{The probability of detection $P_d$}

It refers to the probability to detect a target and therefore to the
probability to obtain an estimation of the state of a target $p$ at
time $t_n$ denoted $X_{t_n,p}$ with action $A_{n}$. In this work,
$X_{t_n,p}$ is composed of the localisation and velocity components of
the target $p$ at time $t_n$ in the x-y plane:
\begin{eqnarray}
X_{t_n,p}=\begin{bmatrix}
rx_{t_n,p}&
ry_{t_n,p}& 
vx_{t_n,p}& 
ry_{t_n,p}
    \end{bmatrix}^T
\end{eqnarray}
where the subscript $T$ stands for \textit{matrix transpose}. The terms
$rx_{t_n,p}$ and $ry_{t_n,p}$ refers here to the position and
$vx_{t_n,p}$ and $vy_{t_n,p}$ the velocity of target $p$ at time
$t_n$. We also denote $D_{n,p}$ the random variable which takes values
$1$ if the radar produces a detection (and therefore an estimation)
for target $p$ and $0$ else :
\begin{eqnarray}
  D_n=\begin{bmatrix}
    D_{n,1} & \ldots & D_{n,P} 
    \end{bmatrix}^T\;.
\end{eqnarray}
As said previously, this probability also depends on the time of
observation $\delta_n$. Aerial targets being considering here, the
reflectivity of a target can be modelled using a Swerling I model
\cite{Curry2005}. We then have the following relation between the
probability of detection and the probability of false alarm $P_{fa}$
(i.e. the probability that the radar produce a detection knowing that 
there is no target) (\cite{Wintenby2003, duflos2007a}):

\begin{eqnarray}\label{def:probabilite detection}
 P_d(x_{t_n,p},A_{n})=
P_{fa}^{\frac{1}{1+\rho(x_{t_n,p},A_{n})}} 
\end{eqnarray}

where $\rho(x_{t_n,p},A_{n})$ is the target signal-to-noise ratio. In
the case of an ESA radar, it is equal to :

\begin{eqnarray}\label{def:SNR}
 \rho(x_{t_n,p},A_{n}) = 
\alpha \delta_n\frac{cos^2 \theta_{n}}{r_{t_n,p}^4}e^{-\frac{(\beta_{t_n,p}-\theta_{n})^2}{2B^2}}
\end{eqnarray}

where $r_{t_n,p}$ is the target range and $\beta_{t_n,p}$ the azimuth
associated to target $p$ at instant time $t_n$. $\alpha$ is a
coefficient which includes all the parameters of the sensor and $B$ is
the beamwidth of the radar. It is reminded in Appendix A how the
equations \ref{def:probabilite detection} and \ref{def:SNR} may be 
derived. If we make the assumption that all the
detections are independant, we can write :

\begin{eqnarray}
\bP(D_n = d_n|X_{t_n}=x_{t_n},A_n) = \nonumber \\
\prod_{p}^{P} \bP(D_{n,p}=d_{n,p}|X_{t_n,p}=x_{t_n,p},A_{n})
\end{eqnarray}
where
\begin{eqnarray}
 \bP(D_{n,p}=d_{n,p}|X_{t_n,p}=x_{t_n,p},A_{n}) = \nonumber \\
P_d(x_{t_n,p},A_{n})\delta_{d_{n,p}=1} + (1-P_d(x_{t_n,p},A_{n}))\delta_{d_{n,p}=0}
\end{eqnarray}

\paragraph*{Observation equation}

At instant time $t_n$, the radar produces a raw observation $Y_n$
composed of $P$ measurements :
\begin{eqnarray}
 Y_n=\begin{bmatrix}
    Y_{n,1} & \ldots & Y_{n,P} 
    \end{bmatrix}^T\;.
\end{eqnarray}
where $Y_{n,p}$ is the observation related to target of state value
$x_{t_n,p}$ obtained with action $A_{n}$ (we do not consider here the
problem of measurement-target association). Moveover, we assume that
the number of targets $P$ is known. Each of these measurements has the
following formulation :
\begin{eqnarray}
 Y_{n,p} =
\begin{bmatrix}
 r_{n,p}&
 \beta_{n,p}&
 \dot r_{n,p}
\end{bmatrix}^T
\end{eqnarray}
where $r_{n,p}$, $\beta_{n,p}$, $\dot r_{n,p}$ are range, azimuth and
range rate. The equation observation can be written
\begin{eqnarray}
\bP(Y_n\in dy_n|X_{t_n}=x_{t_n},A_n) = \\
\prod_{p}^{P} \bP(Y_{n,p}\in dy_{n,p}|X_{t_n,p}=x_{t_n,p},A_{n})
\end{eqnarray}
where
\begin{eqnarray}
\bP(Y_{n,p}\in dy_{n,p}|X_{t_n,p}&=&x_{t_n,p},A_{n})\\
&=&g(y_{n,p},x_{t_n,p},A_{n})\lambda(dy_{n,p})
\end{eqnarray}
\begin{eqnarray}
  g(y_{n,p},x_{t_n,p},A_{n}) = \begin{pmatrix}\mathcal{N}\left(h_t(x_{t_n,p}),\Sigma_y\right)P_d(x_{t_n,p},A_{n}) \\ 1-P_d(x_{t_n,p},A_{n})\end{pmatrix}^T  
\end{eqnarray}
and 
\begin{eqnarray}
\lambda(dy_{n,p}) = \lambda_{cont}(dy_{n,p}) + \lambda_{disc}(dy_{n,p}) 
\end{eqnarray}
The relation between the state and the raw observations is given by :

\begin{eqnarray}
  \label{eq:obs_equ}
  Y_{n,p}=h_{t_n}(X_{t_n,p})+W_{n,p}
\end{eqnarray}

with $h_{t_n}(x_{t_n,p})$ equals to:

\begin{eqnarray}
\begin{pmatrix}
\sqrt{(rx_{t_n,p}-rx^{obs}_{t_n})^2+(ry_{t_n,p}-ry^{obs}_{t_n})^2}\\
\atan\left\{\frac{ry_{t_n,p}-ry^{obs}_{t_n}}{rx_{t_n,p}-rx^{obs}_{t_n}}\right\}\\
\frac{(rx_{t_n,p}-rx^{obs}_{t_n})(vx_{t_n,p}-vx^{obs}_{t_n})+(ry_{t_n,p}-ry^{obs}_{t_n})(vy_{t_n,p}-vy^{obs}_{t_n})}
{\sqrt{(rx_{t_n,p}-rx^{obs}_{t_n})^2+(ry_{t_n,p}-ry^{obs}_{t_n})^2}}
 \end{pmatrix}
\end{eqnarray}
and $W_{n,p}$ a gaussian noise the covariance matrix of which is given by :
\begin{eqnarray}
 \Sigma_y = diag(\sigma^2_{r},\sigma^2_{\beta},\sigma^2_{\dot r})\;.
\end{eqnarray}

\paragraph*{State equation}
First let us introduce the definition of the unknown state $X_t$ at time $t$
and its evolution through time. $X_{t,p}$ is the state of the target
$p$. It has been defined above. Let $P$ be the known number of targets
in the space under analysis at time $t$. $X_{t}$ has the following form: .
\begin{eqnarray}
X_{t}=\begin{bmatrix}
    X_{t,1}&
    \ldots &
    X_{t,P} 
    \end{bmatrix}^T
\end{eqnarray}

Based on \cite{RongLi+Jilkov2003} works, we classically assume that all the 
targets follow a nearly constant velocity model. We use a
discretized version of this model (\cite{LeCadre+Tremois1998}) :
\begin{eqnarray}
 X_{t,p} = F(X_{t-1,p},U_{t}) \textrm{ where } U_t\sim\mathcal{N}\left(0,\sigma^2Q\right) 
\end{eqnarray}
where 
\begin{eqnarray}
 F = \begin{bmatrix}
      1 & 0 & \beta & 0 \\
      0 & 1 & 0 & \beta\\
      0 & 0 & 1 & 0 \\
      0 & 0 & 0 & 1
     \end{bmatrix}\text{ and }
Q = \begin{bmatrix}
      \frac{\beta^3}{3} & 0 & \frac{\beta^2}{2} & 0 \\
      0 & \frac{\beta^3}{3} & 0 & \frac{\beta^2}{2}   \\
      \frac{\beta^2}{2} & 0 & \beta & 0 \\
      0 & \frac{\beta^2}{2} & 0 & \beta
     \end{bmatrix}\;.
\end{eqnarray}

\section{Simulations}
\label{sec:simulations}

\section{Conclusion}

\section*{Appendix A}
We show in this Appendix how the probability of detection is
derived. First, the radar transmits a pulse expressed as follows
\begin{eqnarray}
 s(t) & = & \alpha(t)\cos(w_ct) \\
      & = &\text{Re}\{\alpha(t)e^{jw_ct}\}
\end{eqnarray}
where $\alpha(t)$ is the envelope also called the transmitted pulse
and $w_c$ the carrier frequency. This pulse is modified by the process
of reflection. A target is modelled as a set of elementary reflectors,
each reflecting: time delayed, Doppler shift, Phase shift and
attenuated version of the transmitted signal. We usually assume that
the reflection process is linear and frequency independent within the
bandwidth of the transmitted pulse. The return signal has the
following formulation:
\begin{eqnarray}
 s_r(t) =   G\sum_i\alpha(t-\tau_i)g_ie^{j(w_c(t-\tau_i+\frac{2\dot{r}_{i}}{c}t)+\theta_i)} +n(t)
\end{eqnarray}
where
\begin{itemize}
 \item $g_i$ is the radar cross section associated to reflector $i$,
 \item $\theta_i$ is the phase shift associated to reflector $i$,
 \item $\dot{r}_{i}$ is the radial velocity between the antenna and the object (Doppler frequency shift),
 \item $G$: others losses heavily range dependent due to spatial spreading of energy,
 \item $n(t)$ is a thermal noise of the receiver such that $\text{Re}\{n(t)\}, \text{Im}\{n(t)\} \sim \mathcal{N}(0,\sigma^2_{n})$.
\end{itemize}
We make the following approximations:
\begin{eqnarray}
\left\{\begin{array}{l}
 \dot{r}_{i} \approx \dot{r}\\
 \alpha(t-\tau_i)\approx \alpha(t-\tau)
\end{array}\right.
\end{eqnarray}
where $\dot{r}$ is the mean radial velocity of the target $\tau$ is
the mean time delay of the target. Using these approximations, the
return signal can be rewritten as follows:
\begin{eqnarray}
 s_r(t) =  \alpha(t-\tau)Ge^{j w_Dt}b +n(t) 
\end{eqnarray}
where 
\begin{eqnarray}
\left\{\begin{array}{cl}
w_D & =  w_c(1+\frac{2\dot{r}_{i}}{c}) \\
b  &= \sum_i g_ie^{j(-w_c\tau_i+\theta_i)}
\end{array}\right.\;.
\end{eqnarray}
The fluctuations of $b$ are known and modelled using Swerling 1 model
\cite{Curry2005}. There are differents models availables (Swerling 1,
2, 3,...) corresponding to different types of targets. Swerling 1
given below is convenient for aircrafts. We can then write :
\begin{eqnarray}
 \text{Re}\{b\}, \text{Im}\{b\} \sim \mathcal{N}(0,\sigma^2_{RCS})\;.
\end{eqnarray}
This modelling of $b$ assumes that the phase shifts $\theta_i$ are
independent and uniformly distributed and the magnitudes $g_i$ are
identically distributed. If the number of reflector is large, the
central limit theorem gives that $b$ is a complex-valued Gaussian
random variable centered at the origin. Now, a matching filter is
applied to our return signal
\begin{eqnarray}
 s_m(t) = \int_{-\infty}^{+\infty}s_r(t)h(s) ds
\end{eqnarray}
where $h(t)$ is a shifted, scaled and reversed copy of $s_r(t)$
\begin{eqnarray}
h(s) = \alpha(\delta-t)e^{-j w_D(\delta-t)}\;.
 \end{eqnarray}
 We choose $t=\delta+\tau$ which yields the best signal to noise ratio
 where $\delta$ is the length of the transmitted pulse. The
 probability of detection is based on quantity
 $|s_m(\delta+\tau)|^2$. We can show that
\begin{eqnarray}
 s_m(\delta+\tau) = Ge^{jw_D\tau}b + \int_{-\infty}^{+\infty} n(\delta+\tau-s)h(s)ds\;. 
\end{eqnarray}
One can remark that $s_m(\delta+\tau)$ is the sum of two complex-value
Gaussian variables.  We look at the following statistic
\begin{eqnarray}
 \Lambda = \frac{|s_m(\delta+\tau)|^2}{2\sigma_n^2}
\end{eqnarray}
and we introduce the following notation
\begin{eqnarray}
\sigma_s^2 = G^2 \sigma^2_{RCS}
\end{eqnarray}
Now we construct the test
\begin{eqnarray}
\left\{\begin{array}{l}
\mathcal{H}_1: \text{data generated by signal + noise}\\
\mathcal{H}_0: \text{data generated by noise}
\end{array}\right.
\end{eqnarray}

\begin{eqnarray}
\left\{\begin{array}{l}
\mathcal{H}_1: p_{\Lambda}(x)= \frac{1}{\frac{\sigma_s^2}{\sigma_n^2}+1}e^{-\frac{x}{\frac{\sigma_s^2}{\sigma_n^2}+1}}\\
\mathcal{H}_0: p_{\Lambda}(x)= e^{-x}
\end{array}\right.
\end{eqnarray}
Then, we derive the probability of detection and false alarm.
\begin{eqnarray}
\left\{\begin{array}{l}
P_d = \int_{\gamma}^{+\infty}p_{\Lambda}(x|\mathcal{H}_1\text{ is true})=e^{-\frac{\gamma}{\frac{\sigma_s^2}{\sigma_n^2}+1}}\\
P_{fa} = \int_{\gamma}^{+\infty}p_{\Lambda}(x|\mathcal{H}_0\text{ is true })=e^{-\gamma}
\end{array}\right.
\end{eqnarray}
Consequently
\begin{eqnarray}
 P_d=P_{fa}^{\frac{1}{\frac{\sigma_s^2}{\sigma_n^2}+1}}
\end{eqnarray}
The ratio $\frac{\sigma_s^2}{\sigma_n^2}$ is called the
Signal-to-Noise Ration noted $\rho$. This SNR is related to the
parameters of the system and the target. The classical radar equation
is given by the following formula (\cite{Wintenby2003}):

\begin{eqnarray}
 \rho = \frac{P_t G_t G_r \lambda^2 \sigma}{(4\pi)^3 r^4}
\end{eqnarray}

where $P_t$ is the energy of the transmitted pulse, $G_t$ is the gain
of the transmitted antenna, $G_r$ is the gain of the received antenna,
$\sigma$ is the radar cross section (for an aircraft between $0.1$ and
$1$ $m^2$), $r$ is the target range, $\gamma$ is the system noise
temperature and $L$ is a general loss term. However, the above formula
does not take into account for the sake of simplicity the losses due to
atmospheric attenuation and to the imperfection of the radar. Thus ,
extra terms must be added :
\begin{eqnarray}
 \rho = \frac{P_t G_t G_r \lambda^2 \sigma}{(4\pi)^3 k b L\gamma r^4}
\end{eqnarray}
where $b$ is the receiver noise bandwith (generally consider equal to the
signal bandwidth so that $b=\frac{1}{\delta_t}$), $k$ is Boltzmann's
constant, $\gamma$ is the temperature of the system and $L$ some
losses.  Moreover, the gain reduces with the deviation of the beam
from the antenna normal in an array antenna.
\begin{eqnarray}
 G_t & = G_0 cos^\alpha(\theta_t)\;,\\
 G_r & = G_0 cos^\alpha(\theta_t)
\end{eqnarray}
where $G_0$ is the gain of the antenna. In \cite{Vilmorin2002},
$\alpha = 2$, in \cite{Wintenby2003}, $\alpha = 2.7$.  According
\cite{VanKeuk+Blackman1993}, there is also a beam loss because the
radar beam is not pointing directly so that the radar equation is:
\begin{eqnarray}
 \rho = \frac{P_t G^2_0 \lambda^2 \sigma \delta_t \cos^2(\theta_t)}{(4\pi)^3 k  L\gamma r^4}
e^{-\frac{(\theta_t-\beta_t)^2}{2B^2}}
\end{eqnarray}
where is $B$ is the beamwidth.  

%

%


\bibliographystyle{IEEEtran}
\bibliography{MultiSensor,EduPva,m2vbib,biblio}

\end{document}